\newcommand{\PyComment}[1]{\ttfamily\textcolor{commentcolor}{\# #1}}  
\newcommand{\PyCode}[1]{\ttfamily\textcolor{black}{#1}} 
\definecolor{commentcolor}{RGB}{110,154,155}
\newtheorem{theorem}{Theorem}
\newtheorem{proposition}[theorem]{Proposition}
\crefname{section}{Sec.}{Secs.}
\Crefname{section}{Section}{Sections}
\Crefname{table}{Table}{Tables}
\crefname{table}{Tab.}{Tabs.}
\begin{document}
	
	\title{Transforming  Radiance Field with Lipschitz Network for 
		
		Photorealistic 3D Scene Stylization}
	
 \author{%
 	Zicheng Zhang\textsuperscript{1}
	\quad
	Yinglu Liu\textsuperscript{2}
	\quad
	Congying Han\textsuperscript{1}\thanks{Corresponding author}
	\quad
        Yingwei Pan\textsuperscript{2}
        \quad
	Tiande Guo\textsuperscript{1}
	\quad
	Ting Yao\textsuperscript{2}
	\quad
	\\ 
	\textsuperscript{1}University of Chinese Academy of Sciences
	\quad
	\textsuperscript{2}JD AI Research
	\quad
	\vspace{.5em} 
	\\
	\tt\small zhangzicheng19@mails.ucas.ac.cn
	\quad
	liuyinglu1@jd.com
	\quad
	hancy@ucas.ac.cn
	\\
	\tt\small
     panyw.ustc@gmail.com
	\quad
	tdguo@ucas.ac.cn
	\quad
	tingyao.ustc@gmail.com
}
	\maketitle
	
	\begin{abstract}
		Recent advances in 3D scene representation and novel view synthesis have witnessed the rise of Neural Radiance Fields (NeRFs). Nevertheless, it is not trivial to exploit NeRF for the photorealistic 3D scene stylization task, which aims to generate visually consistent and photorealistic stylized scenes from novel views.
		Simply coupling NeRF with photorealistic style transfer (PST) will result in cross-view inconsistency and degradation of stylized view syntheses.
		Through a thorough analysis, we demonstrate that this non-trivial task can be simplified in a new light: \textbf{When transforming the appearance representation of a pre-trained NeRF with Lipschitz mapping, the consistency and photorealism across source views will be seamlessly encoded into the syntheses.}
		That motivates us to build a concise and flexible learning framework namely LipRF, which upgrades arbitrary 2D PST methods with Lipschitz mapping tailored for the 3D scene.
		Technically, LipRF first pre-trains a radiance field to reconstruct the 3D scene, and then emulates the style on each view by 2D PST as the prior to learn a Lipschitz network to stylize the pre-trained appearance.
		In view of that Lipschitz condition highly impacts the expressivity of the neural network, we devise an adaptive regularization to balance the reconstruction and stylization.
		A gradual gradient aggregation strategy is further introduced to optimize LipRF in a cost-efficient manner.
		We conduct extensive experiments to show the high quality and robust performance of LipRF on both photorealistic 3D stylization and object appearance editing. 
	\end{abstract}
	
	\begin{figure*}[t]
		\centering
		\vspace{-1cm}
		\includegraphics[width=0.98\linewidth]{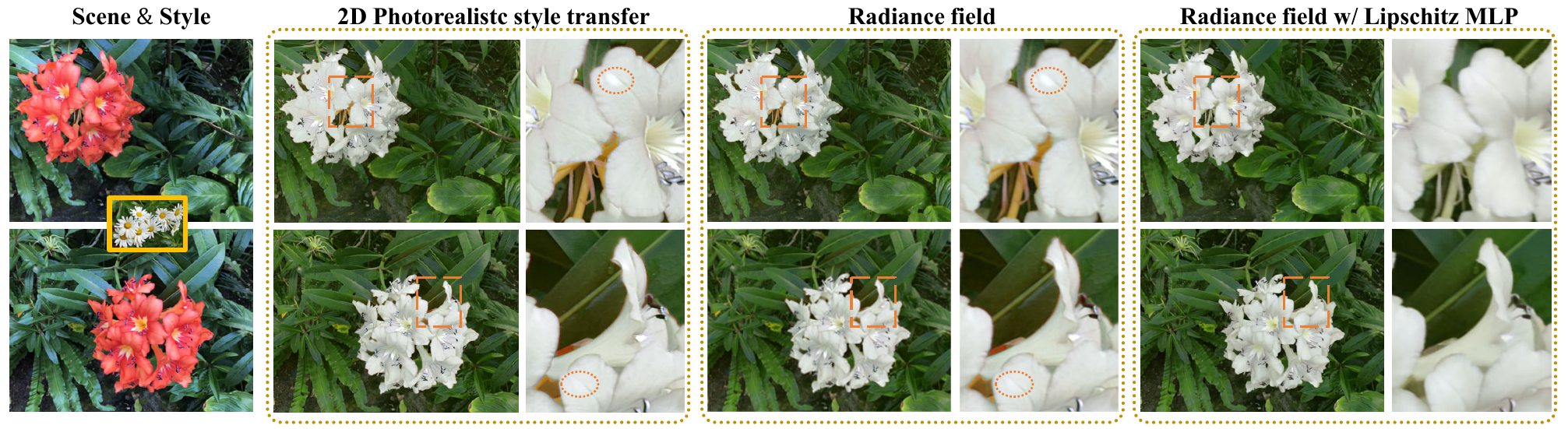}
		\vspace{-0.3cm}
		\caption{Illustrations of different strategies for photorealistic 3D scene stylization.  The 2D PST method \cite{Chiu2022PhotoWCT2CA}  generates disharmonious orange color on the stem and petaline edge. The regions bounded by the dotted ellipses are inconsistent due to the white spot in the first view. When employing a radiance field to reconstruct the results of 2D PST method, the disharmony and inconsistency are still retained. Our LipRF successfully eliminates these downsides, and renders high-quality stylized view syntheses.
		}
		\vspace{-0.5cm}
		\label{fig: motivation}
	\end{figure*}

    
	\section{Introduction}\label{sec:intro}
	Photorealistic style transfer (PST) \cite{ErikReinhard2001ColorTB} is one of the important tasks for visual content creation, which aims to automatically apply the color style of a reference image to another input (\textit{e.g.}, image \cite{FujunLuan2017DeepPS} or video \cite{JaejunYoo2019PhotorealisticST}). In this task, the stylized result is required to look like a camera shot and preserve the input structure (\textit{e.g.}, edges and regions). Benefiting from the launch of deep learning, a series of sophisticated deep PST methods \cite{FujunLuan2017DeepPS,YijunLi2018ACS,JaejunYoo2019PhotorealisticST,huo2021manifold,Wu2022CCPLCC} have been developed for practical usage. Recent progress in 3D scene representation has featured Neural Radiance Field \cite{Mildenhall2020NeRFRS,AlexYu2022PlenoxelsRF} (NeRF) with efficient training and high-quality view synthesis.
	This inspires us to go one step further to explore a more challenging task of photorealistic 3D scene stylization, which is to generate visually consistent and photorealistic stylized syntheses from arbitrary views.
	Such a task enables an automatic modification of 3D scene appearance with different lighting, time of day, weather, or other effects, thereby enhancing user experience and stimulating emotions for virtual reality \cite{Tseng2022Pseudo3DSM}.
	
	Nevertheless, it is not trivial to build an effective framework for photorealistic 3D scene stylization. The difficulty mainly originates from the fact that there is no valid photorealistic style loss tailored for training NeRF. In general, the image-based PST is commonly tackled via either the neural style transfer \cite{Gatys2016ImageST} combined with complicated post-processing \cite{FujunLuan2017DeepPS,Mechrez2017PhotorealisticST,YijunLi2018ACS}, or particular network structures \cite{YijunLi2017UniversalST,JaejunYoo2019PhotorealisticST,Xia2020JointBL}.  However, none of them can be directly applied to the learning of NeRF. As shown in Figure \ref{fig: motivation}, simply employing the state-of-the-art 2D PST on each view might result in noise, disharmony and even inconsistency across views, since the PST methods rely on the size or object masks of the inputs. Such downsides will be further amplified after reconstructing the 3D scene with NeRF.
	
	To alleviate these limitations, we start with a basic understanding of this task: \textit{Though preserving the photorealism and consistency seems to be different in the context of 2D images, they do have the same essence when moving to 3D volume rendering} \cite{Max1995OpticalMF}. From this standpoint, the task is simplified as a problem to regulate the volume rendering variance of the radiance field before and after stylization. According to the studies of color mapping \cite{ErikReinhard2001ColorTB,Omer2004ColorLI,Piti2007TheLM}, some specific linear mappings of image pixels can nicely preserve the image structures with photorealistic effect. Motivated by this, we theoretically demonstrate that a simple yet effective design of Lipschitz-constrained linear mapping over appearance representation can elegantly control the volume rendering variance. Furthermore, we prove that replacing linear mapping with a Lipschitz multilayer perceptron (MLP) also holds these properties under extra assumptions which can be relaxed in practice. Such a way completely eliminates the drawbacks of 2D PST when transforming the radiance field with a Lipschitz MLP (see Figure \ref{fig: motivation}). In a nutshell, our analysis verifies that Lipschitz MLP can be interpreted as an implicit regularization to safeguard the 3D photography of stylized scenes.
	
	By consolidating the idea of transforming the radiance field with the Lipschitz network, we propose a novel NeRF-based architecture (namely LipRF) for photorealistic 3D scene stylization. Technically, LipRF contains two stages: 1) training a radiance field to reconstruct the source 3D scene; 2) learning a Lipschitz network to transform the pre-trained appearance representation to the stylized 3D scene with the guidance of style emulation on each view by arbitrary 2D PST. We adopt the Plenoxels \cite{AlexYu2022PlenoxelsRF}  as the base radiance field due to its advanced reconstruction quality and compressed appearance representation by spheric harmonics. Considering that the Lipschitz
	condition greatly impacts the expressivity of neural networks, we design an adaptive regularization based on spectral normalization
	\cite{Miyato2018SpectralNF} to allow a mild relaxation of the Lipschitz constant in each linear layer.  Finally,  we capitalize on gradual gradient aggregation to optimize LipRF in a cost-efficient fashion.
	
	In summary, we have made the following contributions: (\textbf{I}) We present a thorough and insightful analysis of photorealistic 3D scene stylization on the basis of volume rendering and Lipschitz transformation. (\textbf{II}) We build a concise and flexible framework (LipRF) for photorealistic 3D scene stylization by novelly transforming the appearance representation of a pre-trained NeRF with the Lipschitz Network. (\textbf{III}) Under the Lipschitz condition, we design adaptive regularization and gradual gradient aggregation to seek a better trade-off among the reconstruction, stylization quality, and computational cost. We evaluate LipRF on both photorealistic 3D stylization and object appearance editing tasks to validate the effectiveness of our proposal. 
	
	\section{Related works}
	
	\subsection{Novel view synthesis}
    Novel view synthesis aims to synthesize view images with arbitrary camera poses from given source images. Many works have been proposed on various discrete representations, \textit{e.g.}, multi-plane image \cite{BenMildenhall2019LocalLF,TinghuiZhou2018StereoML}, point clouds \cite{ChenHsuanLin2017LearningEP,GkioxariGeorgia2019SynSinEV}, meshes\cite{PaulDebevec1996ModelingAR,MichaelWaechter2014LetTB}, and voxels \cite{StephenLombardi2019NeuralVL}. Recently, neural radiance field approaches \cite{Mildenhall2020NeRFRS,Zhang2020NeRFAA,MartinBrualla2021NeRFIT,Barron2021MipNeRFAM} encoder the scene into a continuous implicit volumetric representation via multilayer perceptron, and render the novel view by volume rendering integral \cite{Max1995OpticalMF}. Later, Plenoxels \cite{AlexYu2022PlenoxelsRF} reveals that the key element of NeRF is the differentiable volume renderer. By simplifying NeRF into a sparse voxel grid with spherical harmonics, Plenoxels achieves substantial speedups with comparable rendering quality. Our work capitalizes on the benefits of Plenoxels, and enables photorealistic 3D scene stylization to achieve in a few minutes.

	\subsection{Style transfer methods}
	
	\noindent\textbf{Image stylization.}
    Photorealistic style transfer is a long-standing topic \cite{ErikReinhard2001ColorTB,Omer2004ColorLI,FranoisPiti2005NdimensionalPD,SoonminBae2006TwoscaleTM,Chen2016BilateralGU} 
	that focuses on color distribution transfer while maintaining the photorealism of the image.  In the regime of deep learning, since Gatys \textit{et al.} \cite{Gatys2016ImageST} present the neural style transfer that matches the feature distribution \cite{YanghaoLi2017DemystifyingNS,LiPan2019OptimalTO,chen2019mocycle,Kolkin2019StyleTB,Kalischek2021InTL,Zhang_2022_CVPR} to transfer artistic texture, many works incorporate neural networks into PST for more complicated visual effects. One of the early works, Luan \textit{et al.} \cite{FujunLuan2017DeepPS} introduces the semantic mask and photorealistic regularization \cite{AnatLevin2006ACF} to yield impressive results. PhotoWCT \cite{YijunLi2018ACS} further improves the regularization \cite{AnatLevin2006ACF} into a closed form. The works \cite{Mechrez2017PhotorealisticST,JaejunYoo2019PhotorealisticST,Li2019HighResolutionNF,Chiu2022PhotoWCT2CA} propose to preserve the high frequency to ensure photorealism.  
	For example
	, WCT$^2$ \cite{JaejunYoo2019PhotorealisticST} embeds the wavelet transform into neural networks to fully retain image structure.  Other works adopt particular style transfer operators \cite{YijunLi2017UniversalST,Lu2019ACS,huang2017arbitrary,Li2019LearningLT,huo2021manifold,Wu2022CCPLCC} to the Encoder-Decoder pre-trained on natural image dataset, \textit{e.g.}, COCO \cite{TsungYiLin2014MicrosoftCC}.  The advances \cite{Xia2020JointBL,Xia2021RealtimeLP} learn the edge-preserved local affine grid  \cite{Gharbi2017DeepBL} in bilateral space \cite{Chen2016BilateralGU} to transfer the color locally. Because the input size and semantic mask heavily influence these methods, directly applying to 3D scene will produce distortion and inconsistency.
	
	\noindent\textbf{3D scene stylization.}
	Recently, several works \cite{PeiZeChiang2021Stylizing3S,ThuNguyenPhuoc2022SNeRFSN,Huang2022StylizedNeRFC3,Zhang2022ARFAR} couple NeRF with neural style transfer \cite{Gatys2016ImageST} to tackle artistic 3D scene stylization. For example, \cite{PeiZeChiang2021Stylizing3S,ThuNguyenPhuoc2022SNeRFSN, Zhang2022ARFAR} finetune pre-trained NeRFs with the image-based style losses. \cite{Huang2022StylizedNeRFC3} adapts the pre-trained NeRF by means of a 2D style transfer model.   These methods enable better stylization in a variety of scenes, and perform faster training and inference than the approaches \cite{HsinPingHuang2021LearningTS,FangzhouMu20213DPS,KangxueYin20223DStyleNetC3}. However, strict consistency and photorealism are hardly achieved by these methods, making them inapplicable to photorealistic stylization.
	
	\subsection{Lipschitz network}
	The Lipschitz network \cite{Scaman2018LipschitzRO}, \textit{i.e.}, neural network with a limited Lipschitz constant, has advantages in robustness \cite{YaoYuanYang2020ACL}, generalization \cite{YuichiYoshida2017SpectralNR}, and stability of training \cite{Miyato2018SpectralNF}. Since computing the exact Lipschitz constant of neural networks is NP-hard \cite{Scaman2018LipschitzRO}, previous methods usually minimize \cite{Gouk2021RegularisationON,Liu2022LearningSN} or specify \cite{Miyato2018SpectralNF,Gulrajani2017ImprovedTO} an upper bound of Lipschitz constant as a small value. Due to the small Lipschitz constant limits the visual effect in synthesis task, these methods cannot provide a proper  constraint for the renderer/generator. To solve this, we propose to adaptively constrain Lipschitz property according to the given scene and reference image.
	
	\section{Preliminaries}
	
	\noindent\textbf{Radiance field.}
	Generally, the radiance field \cite{Mildenhall2020NeRFRS} is a 5D function $\mathbf{F}$ that maps any 3D location $\boldsymbol{x}$ and viewing direction $\boldsymbol{d}$ to volume density $\sigma$ and color $\boldsymbol{c} = (r,g,b)$. Specifically, $\mathbf{F}$ can be further divided into the geometry part $\sigma = \mathbf{F}_{geo}(\boldsymbol{x})$ and the appearance part $\boldsymbol{c} = \mathbf{F}_{app}(\boldsymbol{x},\boldsymbol{d})$, \textit{i.e.}, $\mathbf{F} = (\mathbf{F}_{geo}, \mathbf{F}_{app})$. Recently, \cite{AlexYu2021PlenOctreesFR,AlexYu2022PlenoxelsRF} factorize the appearance via spherical harmonic representation:
	\begin{equation}\label{eq: sh}
		\boldsymbol{c} = \mathbf{F}_{sh}(\boldsymbol{x})\Gamma(\boldsymbol{d})+\boldsymbol{v},  
	\end{equation}
	where $\Gamma$ is the pre-defined basis function to produce the $\ell$-dimensional basis, $\mathbf{F}_{sh}$ computes the corresponding ${3\times \ell}$ coefficient matrix, and $\boldsymbol{v}$ is a fixed vector to normalize the colors. This form can greatly reduce the redundancy of the representation, and speed up the training and inference.  
	
	\noindent\textbf{Volume rendering.}
	A ray cast into the scene can be formulated as $\mathbf{r}(t) = \mathbf{o}+t\mathbf{d}$, where $\mathbf{o}$ and $\mathbf{d}$ are the origin and normalized direction of the ray,  and $t$ denotes the distance along the ray. The color of the ray with near and far bounds $t_1$ to $t_{T+1}$ is estimated by the volume rendering
	\begin{equation}\label{eq: alpha composition}
            \setlength\abovedisplayskip{2pt}
            \setlength\belowdisplayskip{2pt}
		{C}(\mathbf{r};\mathbf{F})=\sum\nolimits_{i=1}^T  w_i \boldsymbol{c}_i,
	\end{equation}
	\begin{equation}\label{eq: w}
		w_i=(1-e^{-\sigma_i(t_{i+1}-t_{i})})e^{-\sum_{i^{\prime}<i} \sigma_{i^{\prime}}(t_{i^{\prime}+1}-t_{i^{\prime}})},
	\end{equation}
	where $\sigma$ and $\boldsymbol{c}$ are predicted by the radiance field $\textbf{F}$. 
	
	\noindent\textbf{Lipschitz functions.}  
	Given two metric spaces $\mathcal{X}$ and $\mathcal{Y}$, a function $f : \mathcal{X} \rightarrow \mathcal{Y}$ is Lipschitz, or $K$-Lipschitz continuous if there exists $K \in \mathbb{R}^{+}$ satisfied that: 
	\begin{equation}
		\forall x_1, x_2 \in \mathcal{X},	\ d_{\mathcal{Y}}\left(f\left(x_1\right), f\left(x_2\right)\right) \leq K d_{\mathcal{X}}\left(x_1, x_2\right),
	\end{equation}
	where $d_{\mathcal{X}}$ and $d_{\mathcal{Y}}$ are metrics in $\mathcal{X}$ and $\mathcal{Y}$, respectively. $K$ is called the Lipschitz constant of $f$. In this paper, we define $d_{\mathcal{X}}$ and $d_{\mathcal{Y}} $ as (2-norm) Euclidean distances of vectors. 
	
	\section{Problem statement}\label{sec: Problem statement}
	\begin{figure}[t]
		\centering
		\includegraphics[width=0.99\linewidth]{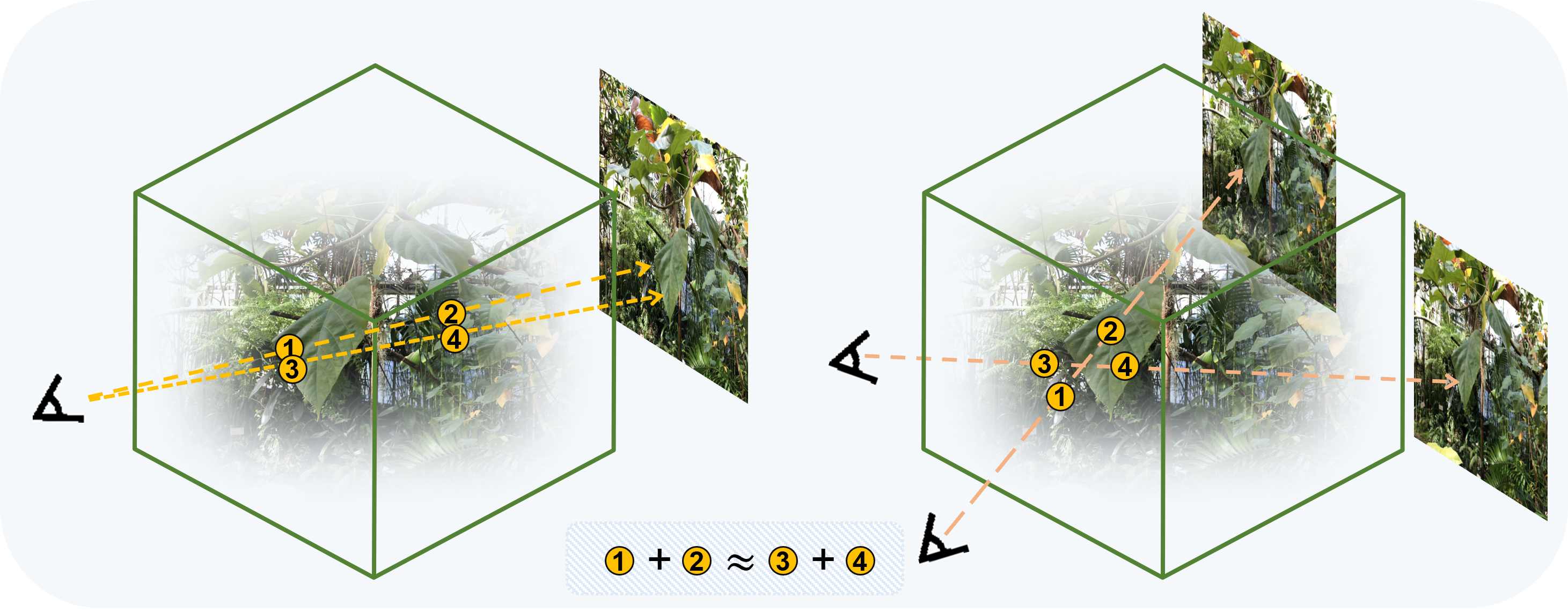}
		\caption{Illustrations of image structure (left) and cross-view consistency (right) from the perspective of volume rendering. For easy understanding, we placed images behind radiance fields (the cubes), and the two yellow circles on each ray denote the weighted colors sampled for volume rendering as Eq. \eqref{eq: alpha composition}. For each case, the volume rendering variance as Eq. \eqref{eq: vrr} should be a small value. 
		} 
		\vspace{-0.47cm}
		\label{fig: structure and consistency}
	\end{figure}   
	
	\noindent\textbf{Task and challenges.}
	Given a set of images $\{\mathbf{I}_i\}_{i=1}^{N}$ taken in a 3D scene $\mathcal{I}$ with known camera parameters, our goal is to synthesize novel photorealistic views with the similar color style to the reference image. 
	As discussed in Section \ref{sec:intro},  
	the main challenge is to protect \textit{photorealism} and \textit{consistency} while transferring the color style. More specifically,
	1) For photorealism, the image structure (\textit{e.g.},  edges and regions) needs to be well preserved after stylization. Without loss of generality, considering two nearby pixels at $\mathbf{p}_1$ and $\mathbf{p}_2$ in an image $\mathbf{I}$, we succinctly state that if $\left \| \mathbf{I}(\mathbf{p}_1) -\mathbf{I}(\mathbf{p}_2) \right \| <\epsilon$ ($\epsilon$ is a small value), the two belong to the same region; otherwise not. 2) For consistency, we suppose that one point in the scene can be observed by two adjacent views $\mathbf{I}_1$ and $\mathbf{I}_2$ at $\mathbf{p}_1$ and $\mathbf{p}_2$ (with a slight abuse of notation), respectively. Thus, the cross-view consistency should meet $\left \| \mathbf{I}_1(\mathbf{p}_1) -\mathbf{I}_2(\mathbf{p}_2) \right \| <\epsilon$. For this task, the above two relations between pixels in the stylized images should be consistent with that of the source images.
	
	\noindent\textbf{A novel perspective from volume rendering.} Due to the fact that the intensity of pixels is physically derived from the volume rendering of corresponding ray castings, we reinterpret them from the perspective of rays. Suppose that $\mathbf{r}_1$ and $\mathbf{r}_2$ intersect with the single/couple views at $\mathbf{p}_1$ and $\mathbf{p}_2$, referring to Eq. \eqref{eq: alpha composition}, we define the {volume rendering variance} ($vrr$) of rays $\mathbf{r}_1$ and $\mathbf{r}_2$ in the radiance field $\mathbf{F}$ by
	\begin{equation}\label{eq: vrr}
		\begin{small}
			\begin{aligned}
				vrr(\mathbf{r}_1,\mathbf{r}_2; \mathbf{F}) &= \left \| {C}(\mathbf{r}_1;\mathbf{F}) -{C}(\mathbf{r}_2;\mathbf{F})  \right \| \\ 
				&= \left \| \sum\limits_{i=1}^{T} w^{\mathbf{r}_1}_{i}\boldsymbol{c}^{\mathbf{r}_1}_i -\sum\limits_{i=1}^{T} w^{\mathbf{r}_2}_{i}\boldsymbol{c}^{\mathbf{r}_2}_i  \right \|, \\
			\end{aligned}  
		\end{small}
	\end{equation}	     
	where the superscripts denote the ray index. Importantly, the arrival of $vrr$ integrates the intricate relationships within (structure) and between (consistency) images into the variance over rays.
	Figure \ref{fig: structure and consistency} depicts a concise example. In this way,  if a radiance field $\mathbf{F}$ could represent the scene $\mathcal{I}$, the core challenge of this task can be streamlined to learn a stylized radiance field $\mathbf{F}'$ satisfying 
	\begin{equation}\label{eq: vrr relation}
		\begin{small}
			vrr(\mathbf{r}_1,\mathbf{r}_2; \mathbf{F}) < \epsilon \Rightarrow vrr(\mathbf{r}_1,\mathbf{r}_2; \mathbf{F}') < \epsilon',
		\end{small}
	\end{equation}	
	where $\epsilon'$ is a
	small value. In the following, we prove that the above demand can be fulfilled elegantly recurring to the Lipschitz mapping, thereby leading to decent stylization.

	\section{Methodology}
	In this section, we introduce a concise and flexible framework called LipRF to tackle photorealistic 3D scene stylization. LipRF first obtains the radiance field $\textbf{F}$ of the source scene (Sec. \ref{sec: Scene representation}). Based on the theoretical analysis  (Sec. \ref{sec: Theoretic}) of controlling $vrr$ with Lipschitz mappings, LipRF transforms the pre-trained radiance field with the Lipschitz MLP (Sec. \ref{sec: Lipschitz transform}) to reconstruct the views stylized by 2D PST. The gradual gradient aggregation (Sec. \ref{sec: GGA}) is elaborated to optimize LipRF in a cost-efficient way.
	
	\subsection{Scene representation via radiance field }\label{sec: Scene representation}
	
	In the first stage, a radiance field $\mathbf{F} = (\mathbf{F}_{geo}, \mathbf{F}_{app})$ with faithful geometry and appearance representation is trained to reconstruct the real scene $\mathcal{I}$. Following \cite{AlexYu2022PlenoxelsRF}, we adopt the reconstruction loss for training:
    \begin{equation}\label{eq: rec}
            \setlength\abovedisplayskip{1pt}
            \setlength\belowdisplayskip{1pt}
		\mathcal{L}_{rec}(\mathbf{F},\mathcal{I}) = \sum\limits_{i=1}^{m} \left\| {C}(\mathbf{r_i};\mathbf{F}) - {C}(\mathbf{r_i})   \right\|^2, 
	\end{equation}
	where $\{\mathbf{r}_i\}_{i=1}^{m}$ denotes the set of rays generated under the given camera parameters, ${C}(\mathbf{r_i};\mathbf{F})$ is the color estimated by volume rendering as in Eq. \eqref{eq: alpha composition}, and ${C}(\mathbf{r_i})$ is the groundtruth color of the corresponding image pixel in $\{\textbf{I}_{i}\}_{i=1}^{N}$. 
	
	We assume $\mathbf{F}_{geo}$ enables fully encoding the geometry of $\mathcal{I}$ after training. Since PST should not change the geometry of the source scene, we directly set $\mathbf{F}'_{geo} = \mathbf{F}_{geo}$. In this way, ${C}(\mathbf{r};\mathbf{F})$ and ${C}(\mathbf{r};\mathbf{F}')$ have the same rendering weights as in Eq. \eqref{eq: w} on the ray path.
	
	\subsection{Theoretic form of stylized radiance field}\label{sec: Theoretic}
	
	The classic 2D PST methods \cite{ErikReinhard2001ColorTB,Xiao2006ColorTI,Piti2007TheLM}, which simply transfer the color style by linear mappings of pixels, can well preserve the image structure.  In addition to linearity, we find another commonality of these methods that the corresponding Lipschitz constants are all of small values, \textit{e.g.}, usually less than 5 on the PST dataset \cite{FujunLuan2017DeepPS}. The fact indicates that Lipschitz property may also play an important role in maintaining structure of images. Prompted by this underlying relationship, we prove that the Lipschitz-constrained linear mapping of $\textbf{F}_{app}$ is indeed an optimal form for holding Cond. \eqref{eq: vrr relation}:
    
	\begin{proposition}\label{proposition 1}
		Considering  $f(\boldsymbol{c}) = \boldsymbol{A}\boldsymbol{c}+\boldsymbol{b}$, $\boldsymbol{A}\in \mathbb{R}^{3\times3}$, $\boldsymbol{b}\in \mathbb{R}^{3\times1}$, if $\mathbf{F}_{app}' = f\circ\mathbf{F}_{app}$, $\sum_{i=1}^{T}w_{i} = 1$ and  $vrr(\mathbf{r}_1,\mathbf{r}_2; \mathbf{F})<\epsilon$, we have  $vrr(\mathbf{r}_1,\mathbf{r}_2; \mathbf{F}')<K\epsilon$, where  $K =\left\| \boldsymbol{A} \right\|_2$ is the Lipschitz constant of $f$\footnote{\label{footnote1} Proof is provided in the supplementary materials.}. 
	\end{proposition}
 
	Since the unique assumption of $\sum_{i=1}^{T}w_{i} = 1$ is in accordance with many radiance field models \cite{Mildenhall2020NeRFRS,AlexYu2021PlenOctreesFR,AlexYu2022PlenoxelsRF} in practice, Prop. \ref{proposition 1} provides an ideal way to establish $\textbf{F}'$, where the variational bound $\epsilon'$ in Cond. \eqref{eq: vrr relation} is influenced by the Lipschitz constant of transform.  Nonetheless, the limited expressivity of linear mapping definitely affects dramatic style effects, \textit{e.g.}, the color variation in Figure \ref{fig: motivation} by the deep PST method.  We propose to mitigate the deficiency while maintaining Cond. \eqref{eq: vrr relation} by means of the Lipschitz MLP:
	\begin{proposition}\label{proposition 2}
		Considering $f=f_l \circ\cdots \circ f_1$,  $f_j(x) = \boldsymbol{A}_jx+\boldsymbol{b}$ if $j = l$ and  $\sigma(\boldsymbol{A}_{j}x)$ otherwise, where $\sigma=\max(0,x)$. If $\mathbf{F}_{app}' = f\circ\mathbf{F}_{app}$, $\sum_{i=1}^{T}w_{i} = 1$ and $\max_{i=1,\dots,T}\left\|  w^{\mathbf{r}_1}_{i}\boldsymbol{c}^{\mathbf{r}_1}_i - w^{\mathbf{r}_2}_{i}\boldsymbol{c}^{\mathbf{r}_2}_i\right\| < \epsilon/T$,  we have $vrr(\mathbf{r}_1,\mathbf{r}_2; \mathbf{F}')<K\epsilon$, where $K = \Pi^l_{i=1}\left\| \boldsymbol{A}_{i} \right\|_{2} $ is the Lipschitz constant of $f$\footref{footnote1}. 
	\end{proposition}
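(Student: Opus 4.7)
The plan is to exploit the fact that $\sigma=\max(0,x)$ is positively homogeneous (i.e.\ $\sigma(tx)=t\sigma(x)$ for $t\geq 0$), which distinguishes this case from Proposition~\ref{proposition 1}. Together with the non-negativity of the volume rendering weights $w_i$, this lets us move each $w_i$ \emph{inside} the nonlinear layers of $f$, so that the hypothesis on $\|w^{\mathbf r_1}_i\boldsymbol c^{\mathbf r_1}_i-w^{\mathbf r_2}_i\boldsymbol c^{\mathbf r_2}_i\|$ can be fed into Lemma~\ref{lemma 1} term by term.

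First I would split off the final affine layer by defining $g:=f_{l-1}\circ\cdots\circ f_1$, so that $f(x)=\boldsymbol A_l g(x)+\boldsymbol b$. Since each inner layer $x\mapsto\sigma(\boldsymbol A_j x)$ is positively homogeneous, $g$ itself satisfies $g(tx)=tg(x)$ for all $t\geq 0$. Plugging this decomposition into $C(\mathbf r;\mathbf F')=\sum_i w_i f(\boldsymbol c_i)$ and using $\sum_i w_i=1$ to absorb the bias, I get
\begin{equation*}
C(\mathbf r;\mathbf F')=\boldsymbol A_l\sum_{i=1}^{T} w_i\, g(\boldsymbol c_i)+\boldsymbol b,
\end{equation*}
so the constant $\boldsymbol b$ cancels when forming $C(\mathbf r_1;\mathbf F')-C(\mathbf r_2;\mathbf F')$.

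Next, since $w_i\geq 0$, positive homogeneity gives $w_i g(\boldsymbol c_i)=g(w_i\boldsymbol c_i)$, so
\begin{equation*}
C(\mathbf r_1;\mathbf F')-C(\mathbf r_2;\mathbf F')=\boldsymbol A_l\sum_{i=1}^{T}\bigl(g(w^{\mathbf r_1}_i\boldsymbol c^{\mathbf r_1}_i)-g(w^{\mathbf r_2}_i\boldsymbol c^{\mathbf r_2}_i)\bigr).
\end{equation*}
Applying the triangle inequality, the operator norm bound $\|\boldsymbol A_l z\|\leq\|\boldsymbol A_l\|_2\|z\|$, and Lemma~\ref{lemma 1} to $g$ (whose Lipschitz constant is $\prod_{j=1}^{l-1}\|\boldsymbol A_j\|_2$) bounds each of the $T$ summands by $\prod_{j=1}^{l-1}\|\boldsymbol A_j\|_2\cdot\epsilon/T$. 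Multiplying through by $\|\boldsymbol A_l\|_2$ and summing yields exactly $K\epsilon$.

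The main obstacle, and the reason Proposition~\ref{proposition 1}'s proof does not carry over verbatim, is that the hypothesis here is a \emph{per-sample} bound on $w_i\boldsymbol c_i$ rather than a bound on the assembled color $C(\mathbf r;\mathbf F)$. Recognising that ReLU's positive homogeneity is precisely the tool needed to commute the scalar $w_i$ past the nonlinearities of $g$ is the crux; once the bias has been killed using $\sum_i w_i=1$ and the weights have been pushed inside $g$, the rest is a routine application of Lemma~\ref{lemma 1} and the triangle inequality.
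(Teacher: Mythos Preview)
Your proposal is correct and follows essentially the same route as the paper's proof: both split off the final affine layer, use $\sum_i w_i=1$ to eliminate the bias $\boldsymbol b$, invoke the positive homogeneity of ReLU to push the nonnegative weights $w_i$ inside the composed inner layers (your $g$ is the paper's $f^{l-1}$), and then finish by the triangle inequality together with Lemma~\ref{lemma 1}. Your write-up is in fact slightly more explicit than the paper about why the bias drops out.
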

	This proposition further necessitates the vanishing of $\left\|  w^{\mathbf{r}_1}_{i}\boldsymbol{c}^{\mathbf{r}_1}_i - w^{\mathbf{r}_2}_{i}\boldsymbol{c}^{\mathbf{r}_2}_i\right\|$, which is valid when the rays are quite adjacent. Despite the premise seems to hamper the utilization of Lipschitz MLP, it can be greatly loosened for the close relation between Lipschitz MLP and linear mapping in Prop.~\ref{proposition 1}: 1)  The above Lipschitz MLP as a piece-wise linear function behaves the same as linear mapping in a local space \cite{Montfar2014OnTN}. 2) The Lipschitz MLP with strict constraint of Lipschitz condition and gradient norm  approximates to the linear mapping \cite{CemAnil2018SortingOL}. These properties encourage to form the stylized radiance field as:
	\begin{equation}\label{eq: theoretic form }
		\mathbf{F}' = (\mathbf{F}_{geo}, f\circ\mathbf{F}_{app}), \  f \  \text{is}\  K\text{-Lipschitz MLP}.
	\end{equation}
	
	Regarding the training complexity of $f$, generally if there are $n$ values sampled for each variable of the position and direction, $f$ needs to take a large amount of parameters with high computational costs for predicting $O(n^5)$ colors correctly. Fortunately, we find a nice property\footref{footnote1} in Eq. \eqref{eq: sh}:
	\begin{equation}
		\boldsymbol{A}\mathbf{F}_{app}({\boldsymbol{x}, \boldsymbol{d}})+\boldsymbol{b} \Leftrightarrow 
		\boldsymbol{A}\mathbf{F}_{sh}({\boldsymbol{x}})+2\sqrt{\pi}[\boldsymbol{A}\boldsymbol{v}+\boldsymbol{b}-\boldsymbol{v},\boldsymbol{0}],
	\end{equation}
	which allows to exchange the linear mappings of appearance representation and spherical harmonic representation. Therefore, $\mathbf{F}_{app}' = f\circ\mathbf{F}_{app}$ can be pared down to  $\mathbf{F}'_{sh} = f\circ\mathbf{F}_{sh}$, while not violating the above propositions. By doing so, it is feasible to design $f$ as a lightweight model to handle the $O(n^3)$ spherical harmonic coefficients.

	\subsection{Lipschitz transformation of radiance field }\label{sec: Lipschitz transform}
	Based on the above analysis, the second step of LipRF is to transform the radiance field $\mathbf{F}$ with Lipschitz MLP $f$ as stylized radiance field $\mathbf{F}'$.  
	Here $f$ is composed of linear and activation layers. It receives and updates the flattened spheric harmonic coefficients.  We also input the 3D position for spatial inductive bias, namely $\mathbf{F}_{sh}'(\boldsymbol{x}) = f(\mathbf{F}_{sh}(\boldsymbol{x}),\boldsymbol{x})$.  The training objective is
	\begin{equation}\label{eq: total loss}
		\mathop{\min}\nolimits_{f} \mathcal{L}_{rec}(\mathbf{F}', \mathcal{S}) + \lambda \mathcal{L}_{Lip}(f).
	\end{equation}
	$\mathcal{S}$ denotes the scene consisting of  $\{pst(\mathbf{I}_{i})\}_{i=1}^{N}$, where $pst$ is an arbitrary 2D PST method.  $\mathcal{L}_{rec}(\mathbf{F}', \mathcal{S})$ is the reconstruction loss of the same form as Eq. \eqref{eq: rec}, and $\mathcal{L}_{Lip}$ is the proposed adaptive Lipschitz regularization to adjust the Lipschitz constant of $f$, and $\lambda$ is the balance weight.  
	
	\noindent\textbf{Lower bound of Lipschitz constant.} In practice, it is difficult to determine the optimal Lipschitz constant $K$ of the network for different scenes and reference images. Large values will invalidate the constraint of $vrr$, while small values may result in over-constrained conditions for the color style transfer. To address this problem, we first estimate a value $K_{est}$ as the lower bound
	of $K$. Here we adopt a widely used color transfer method, \textit{i.e.}, Monge-Kantorovitch linear \cite{Piti2007TheLM} (MKL) mapping, to compute the transfer matrix between $\mathcal{I}$ and $\mathcal{S}$:
	\begin{equation}\label{eq: MKL}
		\boldsymbol{M}=\boldsymbol{\Sigma}_{\mathcal{I}}^{-1 / 2}\left(\boldsymbol{\Sigma}_\mathcal{I}^{1 / 2} \boldsymbol{\Sigma}_\mathcal{S} \boldsymbol{\Sigma}_\mathcal{I}^{1 / 2}\right)^{1 / 2} \boldsymbol{\Sigma}_\mathcal{I}^{-1 / 2},
	\end{equation}
	where $\boldsymbol{\Sigma}_{\mathcal{I}}$ and $\boldsymbol{\Sigma}_{\mathcal{S}}$ are the covariance matrices of pixel colors in $\{\mathbf{I}_{i}\}_{i=1}^{N}$ and $\{pst(\mathbf{I}_{i})\}_{i=1}^{N}$, respectively. We specify $K_{est} = \left\| \boldsymbol{M} \right\|_2 $ so that $f$ outperforms the linear mapping.

	\noindent\textbf{Adaptive Lipschitz regularization.} Since the Lipschitz constant of network is affected by that of each linear layer (see Prop. \ref{proposition 2}), it is hard to optimize $K$ directly.  We reform
	\begin{equation}
		\boldsymbol{A}_i = \text{squareplus}(K_i, b) \boldsymbol{W}_i/\left\|\boldsymbol{W}_i\right\|_2,
	\end{equation}
	where $\boldsymbol{W}_i$ and $K_i$ are the parameters to optimize for the $i$-th linear layer. Here, $\text{squareplus}(x, b)=\frac{1}{2}\left(x+\sqrt{x^2+b}\right)$ \cite{Barron2021SquareplusAS} is similar to ReLU, but always produces positive values to prevent the norm of $\boldsymbol{A}_{i}$ from vanishing. We follow \cite{Miyato2018SpectralNF} to fast approximate $\left\|\boldsymbol{W}_i\right\|_2$ by one-step power iteration. On this basis, the regularization is defined as 
	\begin{equation}\label{eq: llip}
         \setlength\abovedisplayskip{1pt}
         \setlength\belowdisplayskip{1pt}
		\mathcal{L}_{Lip} = \sum_{i = 1}^{l} \text{squareplus}(K_i-\sqrt[l]{K_{est}}, b). 
	\end{equation}
	The Lipschitz constant of network is optimized by constraining the norm of each linear layer to the geometric mean value of $K_{est}$, and thus $\mathcal{L}_{Lip}$ can softly control the gap between $K$ and $K_{est}$ during training. 
	\setlength{\textfloatsep}{0.4cm}
	\begin{algorithm}[t]
		\SetAlgoLined
		\begin{footnotesize}
			\begin{scriptsize}
				\PyComment{f - Lipschitz MLP; Opt - optimizer of f; sh - [B,$\ell$] spheric harmonic coefficients of $\mathbf{F}$; sigma - [B,1] density of $\mathbf{F}$; rs - rays of a view; C - groundtruth; idx - indexes for splitting batches} \\
			\end{scriptsize}
			\PyCode{Opt.zero\_grad()} \\
			\PyComment{0. feed forward} \\
			\PyCode{With torch.no\_grad():} \\
			\PyCode{\ \ \ \ sh\_t = cat([f(sh[i:j]) for i,j in idx])} \\
			\PyCode{\ \ \ \ C\_hat = volume\_render(rs, sigma, sh\_t)} \\ 
			\PyComment{1. backward from rec loss to image} \\
			\PyCode{rec\_loss(C\_hat, C).backward()} \\
			\PyComment{2. backward from image to sh\_t} \\
			\PyCode{p = volume\_render(rs, sigma, sh\_t)} \\
			\PyCode{p.backward(grad = C\_hat.grad)} \\
			\PyComment{3. backward from sh\_t and Lip loss to f} \\
			\PyCode{for i, j in idx:} \\
			\PyCode{\ \ \ \ p = f(sh[i:j])} \\
			\PyCode{\ \ \ \ p.backward(grad = sh\_t[i:j].grad/B)} \\
			\PyCode{(lambda * Lip\_loss(f)).backward()} \\
			\PyCode{Opt.step()} \\
		\end{footnotesize}
		\caption{PyTorch-style pseudocode for GGA}
		\label{algo: ggp}
	\end{algorithm}
	
	\subsection{Optimization by gradual gradient aggregation}\label{sec: GGA}
	For Plenoxels \cite{AlexYu2022PlenoxelsRF}, Lipschitz MLP needs to transform the spherical harmonic coefficients on all vertices at each iteration due to the structure of voxel grid. Therefore, training LipRF will cost a massive GPU memory in both feed forward and back propagation.  The intuitive strategy that optimizes a sparse set of rays \cite{PeiZeChiang2021Stylizing3S,ThuNguyenPhuoc2022SNeRFSN} at once will cause considerable redundancy, since the majority of vertices are not selected.  \cite{Zhang2022ARFAR} proposes to defer the back propagation for removing useless gradient cache, but it still fails on LipRF due to the huge amount of inputs for Lipschitz MLP. 
	
	To increase training efficiency, we propose the gradual gradient aggregation (GGA) detailed in Algorithm \ref{algo: ggp}. First, GGA does not construct the computation graphs during the forward process to reduce memory footprint. Then, the GGA gradually propagates the gradient after redoing each forward step. Finally, the gradient of Lipschitz MLP is aggregated in a batch-wise way. Since the forward process costs much less time than the backward propagation, GGA enables a fast training speed, and tractable memory footprint by adjusting the number of batches. 
		\begin{figure*}[t]
		\centering
		\includegraphics[width=1\linewidth]{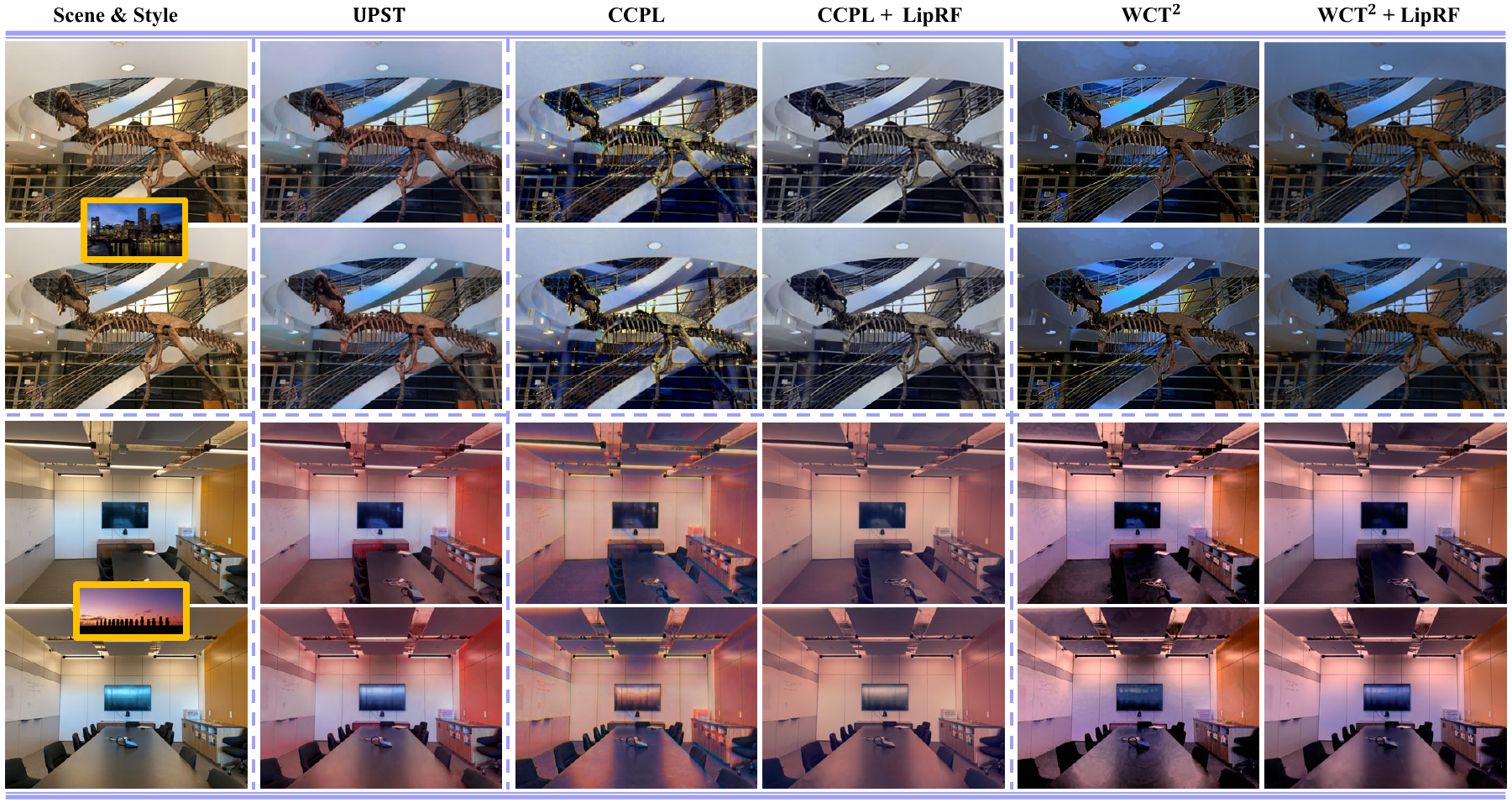}
		\caption{Comparison on “trex” and “room” scenes from LLFF dataset \cite{BenMildenhall2019LocalLF}, where the image resolution is $1008 \times 756$. For the two scenes, LipRF is learned with the prior knowledge derived from the stylized results of $\text{WCT}^2$ \cite{Piti2007TheLM} and CCPL \cite{Piti2007TheLM}.}
		\label{fig: llff_comp}
		\vspace{-0.3cm}
	\end{figure*}
		\begin{figure}[t]
		\centering
		\includegraphics[width=0.99\linewidth]{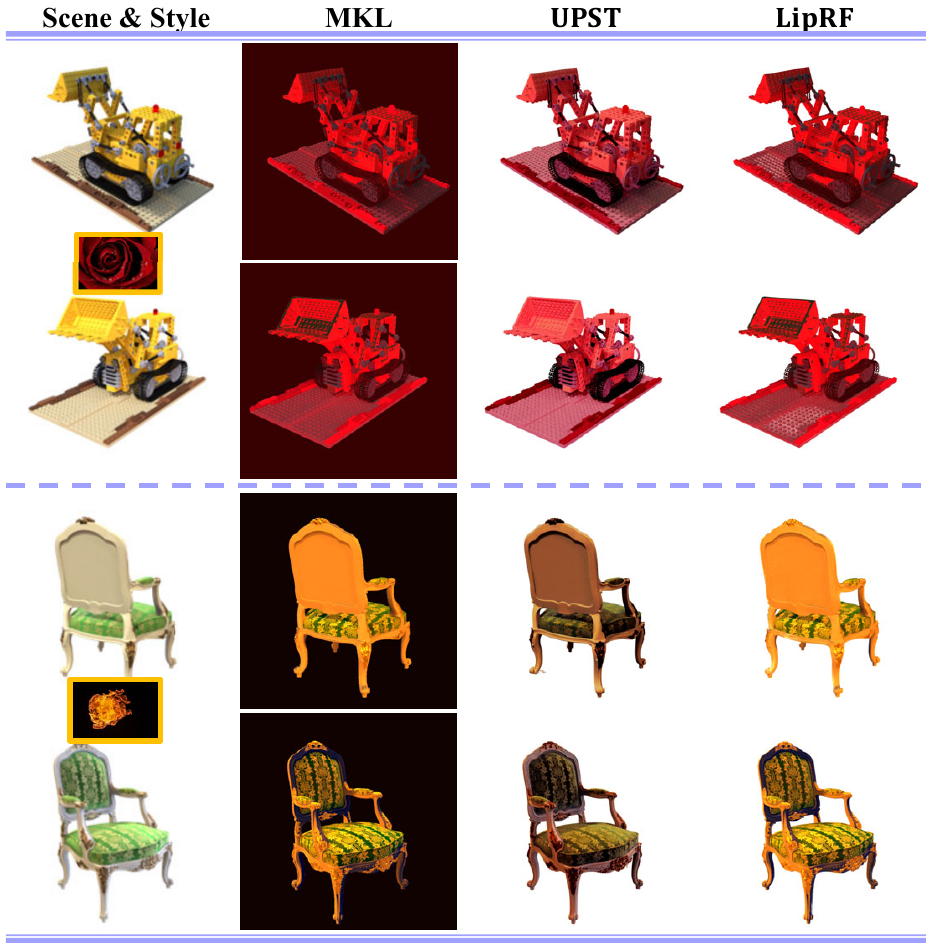}
		\caption{Comparison on NeRF-synthetic dataset \cite{Mildenhall2020NeRFRS}, where the image resolution is $800\times 800$.  LipRF is learned with the prior knowledge derived from the stylized scene of MKL \cite{Piti2007TheLM}. }
		\label{fig: syn_comp}
	\end{figure} 
	\section{Experiments}

	\noindent\textbf{Settings.} 
	The overall architecture of LipRF is implemented based on the official code of Plenoxels\footnote{\url{https://github.com/sxyu/svox2}} \cite{AlexYu2022PlenoxelsRF}. The first stage of training radiance field is the same as in Plenoxels. In the second stage of learning Lipschitz network, the MLP has 5 linear-activation layers and 1 linear output layer. The middle layer has 64 neural units. Note that taking the (1-Lipschitz) sinusoidal function \cite{sitzmann2019siren} as activation can lead to better results than ReLU or LeakyReLU sometimes. We use the Adam optimizer \cite{Kingma2015AdamAM}, where $\beta_1 = 0.9$, $\beta_2 = 0.999$, and the learning rate is reduced from $10^{-2}$ to $10^{-4}$ by cosine annealing \cite{Loshchilov2017SGDRSG}.  We set $\lambda = 2\times 10^{-4}$ in Eq. \eqref{eq: total loss} and $b = 10^{-12}$ in $\text{squareplus}$. Unless otherwise stated, the optimization process runs $300$ epochs in total and takes no more than 7 minutes on a single NVIDIA RTX 3090.  
	
	\noindent\textbf{Datasets.}
	We conduct qualitative and quantitative evaluations to verify LipRF on various scenes in multiple datasets, including \textit{NeRF-synthetic} dataset \cite{Mildenhall2020NeRFRS} of synthetic scenes,  \textit{LLFF}  \cite{BenMildenhall2019LocalLF} dataset of real forward-facing scenes, and some scenes from \textit{Tanks and Temples} dataset \cite{AKnapitsch2017TanksAT} of real $360^{\circ}$ scenes and the multi-view stereo \textit{DTU} dataset \cite{Jensen2014LargeSM}.  The style images are derived from the PST dataset \cite{FujunLuan2017DeepPS}. Due to space limitations, we present part of them in the paper. Please refer to the supplementary materials for more results.  
	
	\noindent\textbf{Baselines.}
	We leverage three existing 2D PST approaches as baselines. In particular,  MKL \cite{Piti2007TheLM} is a classic and widely used color transfer method. $\text{WCT}^{2}$ \cite{JaejunYoo2019PhotorealisticST} is recognized as a typical baseline of 2D PST methods, which is stable for videos or high-resolution images.  $\text{CCPL}$ \cite{Wu2022CCPLCC} is a recent state-of-the-art 2D PST method. 
	Besides, we include a concurrent work UPST \cite{Chen2022UPSTNeRFUP} for comparison, which promotes 2D PST to tackle the same task of photorealistic 3D stylization.

	\subsection{Qualitative results}
	
	\noindent\textbf{NeRF-synthetic dataset.}
	Figure \ref{fig: syn_comp} summarizes the comparison between our LipRF and existing 2D/3D PST methods. Considering that the scenes in synthetic dataset \cite{Mildenhall2020NeRFRS} are constructed with simple texture, we choose the classic
	MKL \cite{Piti2007TheLM} as the only 2D PST baseline. Although MKL can transfer the colors faithfully, it fails to distinguish between the foreground and background, resulting in a drastic change in the background pixel colors. It is worth noting that this is indeed a common problem existing in all 2D PST methods when being simply applied to 3D scene. Moreover, the stylized results of UPST \cite{Chen2022UPSTNeRFUP} show some disparity with the reference color style in vision. In contrast, our approach manages to integrate the advantages of radiance field with MKL. Since the densities of background regions are all 0 in the radiance field, they do not contribute to the volume rendering, encouraging the rendered colors to be consistent with the source scene background.
	
	\noindent\textbf{LLFF dataset.}
	We further illustrate the comparisons on LLFF dataset in Figure \ref{fig: llff_comp}. As shown in the results, the state-of-the-art 2D PST approach (CCPL \cite{Wu2022CCPLCC})  has two main downsides: First, there are a lot of noises in the stylized scene, like contrasting spots and variegation. Second, the stylized results are obviously blurred compared to the source images. This is mainly because it is non-trivial to apply the Encoder-Decoder architecture in CCPL trained on COCO dataset \cite{TsungYiLin2014MicrosoftCC} for the high-resolution inputs of scene images. $\text{WCT}^2$ \cite{Wu2022CCPLCC} performs better than CCPL in structure preservation due to its wavelet module, but the inevitable noises and the intense edges (\textit{e.g.}, the bone boundary of trex) also cause disharmony. By training a high-resolution 2D PST network, UPST \cite{Chen2022UPSTNeRFUP} is able to preserve the
	structure well. However, the color style of stylized scene is different from the reference. Instead, LipRF completely eliminates the limitations of CCPL and $\text{WCT}^2$, and the stylized results have clear advantages in photorealism and color style.
	
		\begin{figure}[t]
		\centering
		\includegraphics[width=0.99\linewidth]{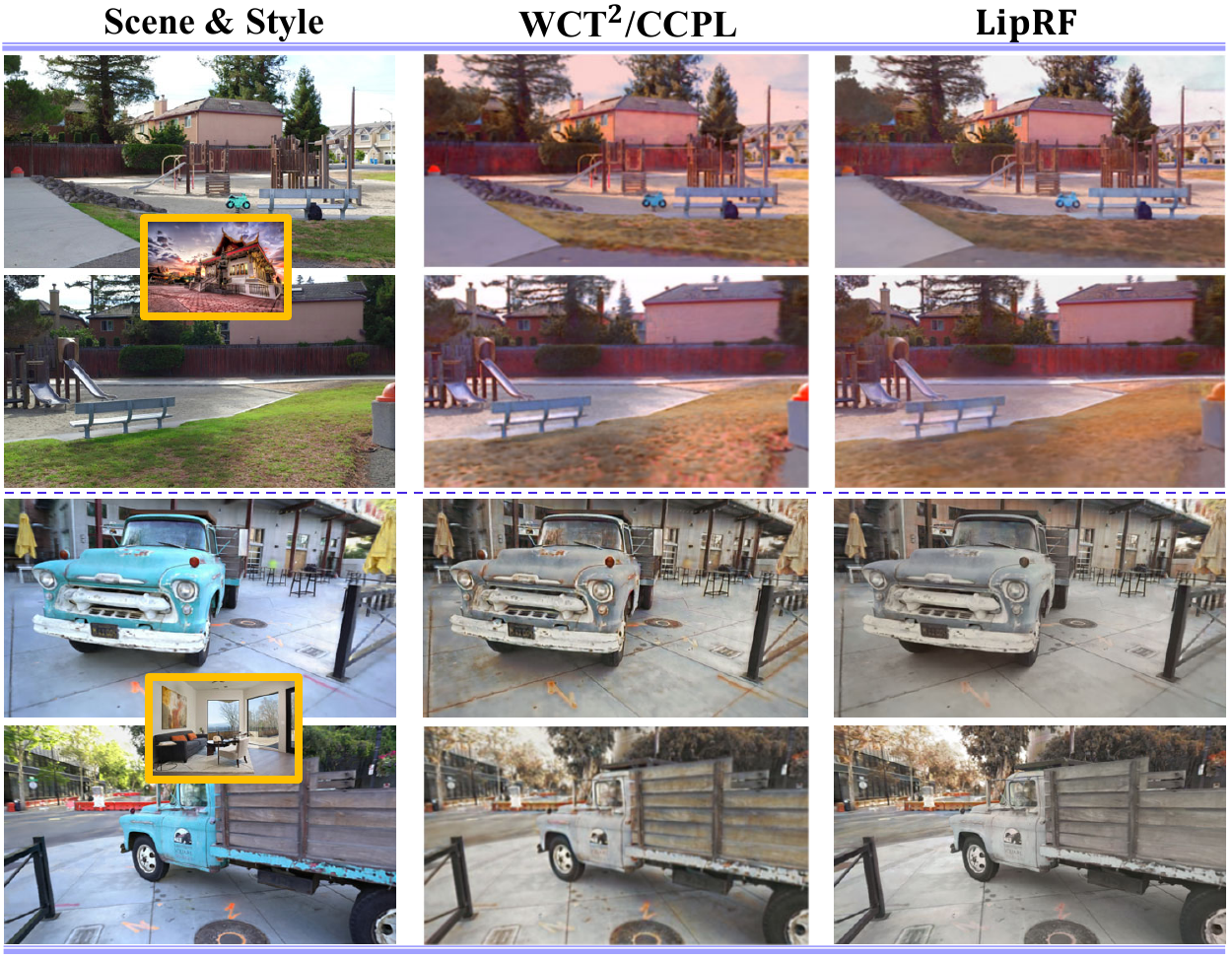}
		\caption{Comparison on Tanks and Temples dataset \cite{AKnapitsch2017TanksAT}.  For two scenes, LipRF is learned with the prior knowledge derived from the stylized results of $\text{WCT}^2$ (first) \cite{Piti2007TheLM} and CCPL \cite{Piti2007TheLM} (second).}
		\label{fig: tnt_comp}
		\vspace{-0.3cm}
	\end{figure}
	
	\begin{figure}[t]
		\centering
		\includegraphics[width=0.99\linewidth]{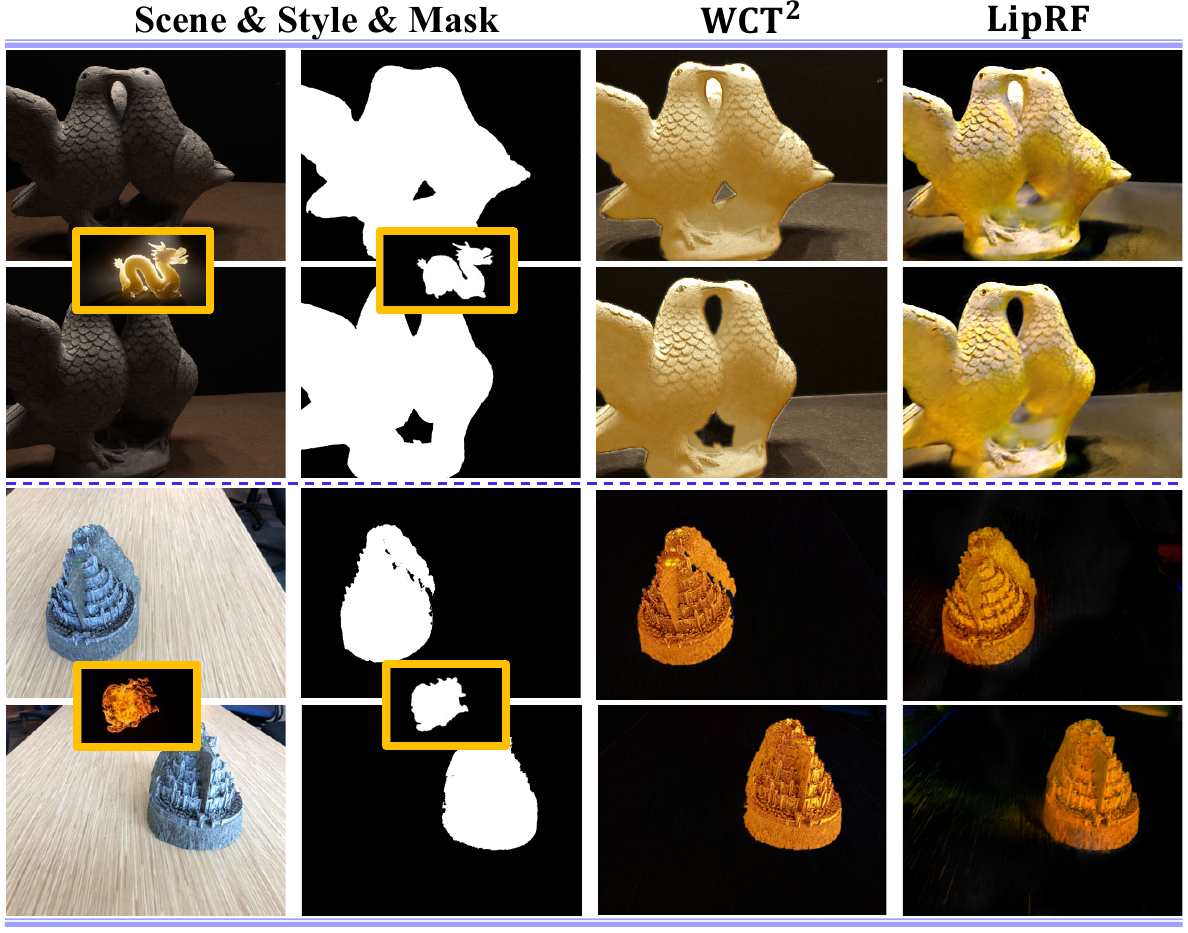}
		\caption{Object appearance editing. The semantic masks of style image and scene are provided to guide the style transfer.  }
		\label{fig: object_edit}
		\vspace{-0.3cm}
	\end{figure}

	\noindent\textbf{Tanks and Temples dataset.}
	Since UPST \cite{Chen2022UPSTNeRFUP} does not support this dataset, Figure \ref{fig: tnt_comp} depicts the comparison between LipRF and 2D PST methods. Similarly, 2D PST methods result in cross-view inconsistency like the sky color on the playground, and strong noises like the messy colors on car and ground. After transforming radiance field with Lipschitz Network, our LipRF manages to alleviate these issues and obtains high-quality stylized results.
	
	\noindent\textbf{Object appearance editing.}
	Another practical application of PST is object appearance editing \cite{FujunLuan2017DeepPS}, which leverages additional semantic masks of objects to guide style transfer. However, it is difficult to precisely annotate the object masks of each view. In Figure \ref{fig: object_edit}, the first scene comes from the DTU dataset \cite{Jensen2014LargeSM}, and the provided inaccurate masks result in the inconsistency of stylized images.  The same problem is also observed for the
	second scene taken from LLFF \cite{BenMildenhall2019LocalLF}  with automatic annotation. In contrast, LipRF obtains photorealistic and consistent results.
	
	\noindent\textbf{Style interpolation.}
	Once the training of LipRF is completed, we can interpolate the source and stylized radiance fields to obtain
	$\textbf{F}^{\alpha}_{sh} = \alpha\textbf{F}'_{sh}+(1-\alpha)\textbf{F}_{sh}$ by adjusting the factor $\alpha \in [0,1]$ (see Figure \ref{fig: interp}). This provides an efficient way to control or serialize the style change of scenes, saving much time compared with the interpolation of image pixels.

	\subsection{Quantitative results}

	\noindent\textbf{Consistency.}
	We take the temporal consistency \cite{Lai-ECCV-2018} as the metric for evaluating the cross-view consistency. Specifically, given two view synthesis $x_{i}$ and $x_{j}$ of a scene, the temporal consistency is computed by
		\begin{equation}
			{TC}\left(x_i, x_j\right)=\frac{1}{\left|\mathcal{O}_{i, j}\right|}\left\|\mathcal{O}_{i, j} \mathcal{W}_{i, j}\left(x_i\right)-\mathcal{O}_{i, j} x_j\right\|^2,
		\end{equation}
	where $\mathcal{W}_{i,j}$ warps $x_i$ to $x_j$ according to the optical flow estimated by RAFT \cite{Teed2020RAFTRA}, and mask $\mathcal{O}_{i,j}$ labels non-occluded pixels \cite{Ruder2016ArtisticST} in $x_i$ and $x_j$. $|\mathcal{O}_{i,j}|$ is the sum of tensor items. We further convert $TC$ to the readable PSNR form
		\begin{equation}\label{eq: psnr tc}
			{TC}_{psnr}(x_i, x_j) = -10 * \log_{10}(TC\left(x_i, x_j\right)).
		\end{equation}
	\noindent The metric is conducted on 8 scenes of LLFF dataset \cite{BenMildenhall2019LocalLF}, and each scene will be stylized based on 4 images from PST dataset \cite{FujunLuan2017DeepPS}. The number of evaluated views is the same as that of training views. We report the short temporal consistency taking ($x_{i}, x_{i+1}$) as the input pair, and the long temporal consistency taking ($x_{i}, x_{i+5}$) as input pair. Table \ref{table: short temporary} details the average results and we have three main observations. The first is that LipRF is the only method that manifests similar or advanced properties over linear method MKL \cite{Piti2007TheLM}. Second, LipRF stably promotes the consistency of 2D PST, thereby upgrading 2D PST to adapt for the 3D scene. Finally, LipRF does benefit from the improvement of 2D PST method, as evidenced by exhibiting better consistency of $\text{WCT}^2$ with LipRF against CCPL with LipRF.
	
    \noindent\textbf{User study.}
	We further conduct subjective evaluation to compare the style effects. We operate LipRF with $\text{WCT}^2$, UPST and MKL on 16 pairs of scenes and style images,  then invite 35 volunteers to comprehensively assess and vote for their favorite rendered videos. Finally, LipRF obtains $79\%$ preference that surpasses the $11\%$ of UPST and $10\%$ of MKL, proving the impressive effects of LipRF.  
	
	\begin{figure}[t]
		\centering
		\includegraphics[width=1.0\linewidth]{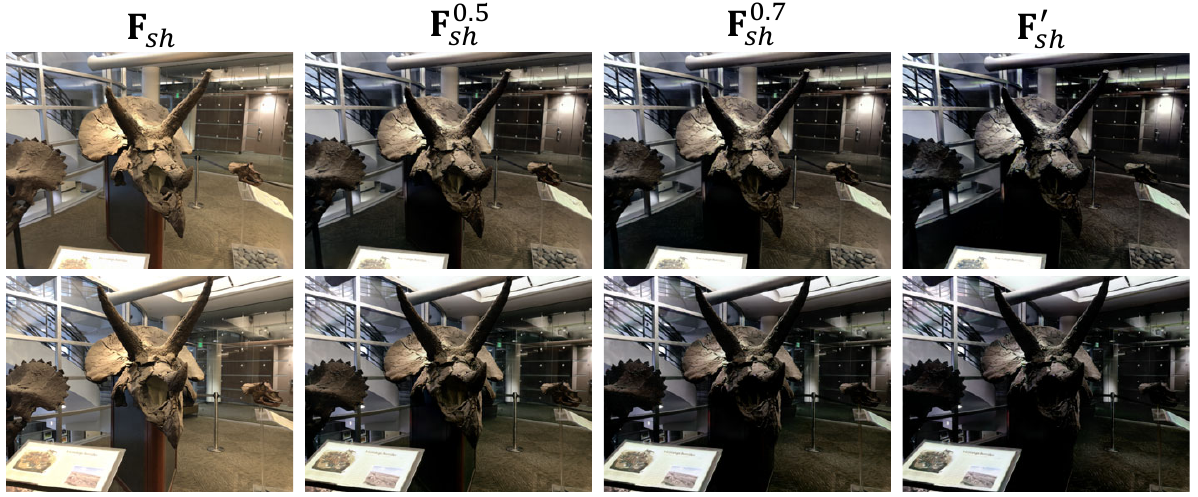}
         \vspace{-0.5cm}
		\caption{Style interpolation within radiance fields. }
		\label{fig: interp}
		\vspace{-0.2cm}
	\end{figure}
	\begin{table}[t]
		\small
		\setlength\tabcolsep{1.5pt}
		\begin{tabular}{c|c|c|c|c|c|c|c|c|c}
			\hline\hline
			& fern & flower & fortress & horns & leaves & orchids & room & trex & Avg. \\ \hline
			$\text{MKL}$&    28.4  &     32.9   &     \textbf{34.2}     &   31.1    &     \textbf{26.2}  &   27.0      &   28.8   &   27.7   &  30.0 \\ \hline
			$\text{WCT}^2$&    24.0  &     28.5   &     24.1     &   27.9    &     22.7   &   24.3      &   25.9   &   24.7   &  25.3  \\
			
			w/ Lip&   \textbf{29.5}   &   \textbf{34.1}     &    {33.4}      &    \textbf{32.6}  &    25.8    &   \textbf{28.0}      &   \textbf{29.4}   &   \textbf{28.7}   &   \textbf{30.3}  \\ 	\hline
			
			CCPL&   22.0   &   23.4     &     22.5      &    25.1   &   20.5     &    21.1    &   23.9   &  24.7    &   22.9   \\
			
			w/ Lip&   26.7   &     30.7   &     31.0    &   30.9    &   \textbf{26.2}     &     262    &    28.8  &   27.4   &  28.5  \\ 	\hline
			UPST&   27.6   &   33.4     &     31.0     &   30.5    &   \textbf{26.2}     &    26.7     &   30.4   &  27.7    & 28.3 \\   
			\hline\hline
			
			\hline\hline
			$\text{MKL}$&    25.6  &     26.3  &    \textbf{28.0}     &   26.3    &     \textbf{22.9}   &   23.6      &   24.6   &   23.8   &  25.1 \\ \hline
			$\text{WCT}^2$&    22.1  &     23.7   &    19.1     &   24.4    &     20.0   &   21.5      &   22.2   &   21.4   &  21.8  \\
			w/ Lip&   \textbf{26.7}   &   \textbf{28.0}     &    {27.9}      &    \textbf{27.8}  &    {22.5}    &   \textbf{24.6}      &   {25.3}   &   \textbf{24.3}   &   \textbf{25.9}  \\ 	\hline
			
			CCPL&   20.5   &   19.8     &    18.3       &    22.5   &  18.7     &    18.9    &   21.5   &  22.4    &    20.3   \\
			
			w/ Lip&   24.1   &    24.6   &    24.1    &   26.3    &   22.9     &     22.9    &    24.7  &  23.6   & 24.2  \\ 	\hline
			UPST&   24.8   &   26.6     &     22.7     &   26.0    &   21.2    &    23.6     &   \textbf{25.7}   &  23.7   & 24.3 \\   
			\hline\hline
		\end{tabular}
		\vspace{-0.3cm}
		\caption{Comparisons of short (upper) and long  (lower) temporary consistency on LLFF dataset. The higher the value, the better the consistency. “w/ Lip” means coupling LipRF with the 2D PST method in the block. The last column shows the average value.}
		\label{table: short temporary}
		\vspace{-1em}
	\end{table}
	\begin{figure}[t]
		\centering
		\includegraphics[width=0.99\linewidth]{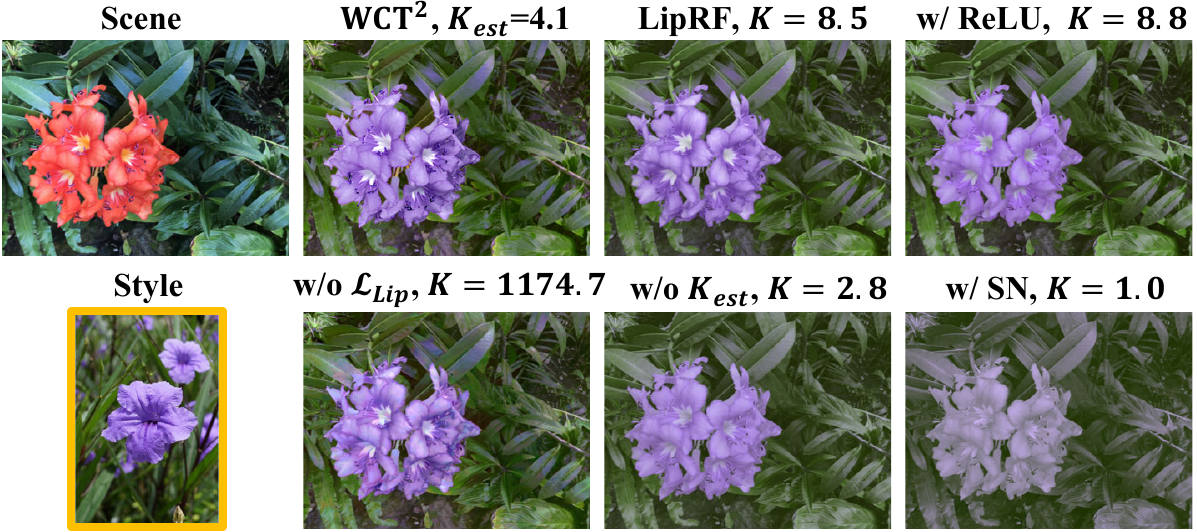}
         \vspace{-0.1cm}
		\caption{The results with different Lipschitz constants. $K_{est}$ is the estimated value in Eq. \eqref{eq: MKL} . “w/ ReLU” replaces the sinusoidal activation of LipRF with ReLU. “w/o $K_{est}$” sets the lower bound to zero. “w/ SN” replaces the adaptive regularization with the spectral normalization that forces the Lipschitz constant to 1.  }
		\label{fig: ablation}
  	\vspace{-1em}
	\end{figure}

	\subsection{Ablation study}
	\noindent\textbf{Alternative activations.}
	The activation is essential for constructing Lipschitz networks, and the commonly used ReLU (or LeakyReLU), Sigmoid, Sine and Tanh are all Lipschitz continuous. In the experiments, we found that Sigmoid and Tanh do not work due to the vanishing gradients sometimes. In the remaining alternatives, the Sine-based LipRF can obtain more faithful color style compared with the ReLU-based one.  Figure \ref{fig: ablation} visualizes the comparisons.
	
	\noindent\textbf{Lipschitz regularization.} 
	Next, we study the impact of the proposed adaptive Lipschitz regularization for LipRF. As shown in Figure \ref{fig: ablation}, LipRF will create noisy details when removing the regularization. Meanwhile, the Lipschitz constant surges from 8.5 to 1174.7. Furthermore, when removing or setting $K_{est}$ to zero, the Lipschitz constant of LipRF will decrease dramatically from 8.5 to 2.8 after 300 epochs. Both qualitative results validate our strategy. We also compare our strategy with spectral normalization \cite{Miyato2018SpectralNF}, and the results indicate that the lower the Lipschitz constant, the weaker the color change, and the visually smoother the image. The comparison again proves the merits of LipRF.

	\section{Conclusion}
	In this paper, we present a novel and well-motivated framework namely LipRF to tackle photorealistic 3D scene stylization, where the main challenge is the preservation of photorealism and consistency during stylization. To this end, we first show that the two objectives can be simplified
into maintaining volume rendering variance before and after stylization. Then, we prove that a Lipschitz MLP enables controlling the variance. Third, we propose an adaptive regularization to constrain the lower bound of Lipschitz constant and introduce the gradual gradient aggregation to optimize LipRF in a cost-efficient manner. Extensive experiments shows the versatility of LipRF. The main limitation is that, LipRF relies on the pre-trained radiance field. Once the radiance
field cannot reconstruct the scene well, the learning of
LipRF would be failed. We think this problem
can be solved with the development of radiance fields.

    \noindent\textbf{Acknowledgements.} This paper is supported by the National key research and development program of China (2021YFA1000403), and the National Natural Science Foundation of China (Nos. U19B2040, 11991022).
   
	{\small
		\bibliographystyle{ieee_fullname}
		\bibliography{citation}
	}
	
\end{document}